\theoremstyle{plain}
\newtheorem{theorem}{Theorem}[section]
\theoremstyle{definition}
\newtheorem{definition}[theorem]{Definition}
\theoremstyle{remark}
\newcommand{\ourmethod}[0]{\textsc{EAP}}
\newcommand{\numberset}{\mathbb} %
\newcommand{\R}{\numberset{R}}
\newcommand{\E}{\numberset{E}}
\newcommand{\1}{\mathds{1}}
\newcommand{\Pp}{\mathbb{P}}
\newcommand{\xreal}{x_{\textsc{r}}}
\newcommand{\xind}{x_{\textsc{i}}}
\newcommand{\xunr}{x_{\textsc{u}}}
\newcommand\reallywidehat[1]{%
\savestack{\tmpbox}{\stretchto{%
  \scaleto{%
    \scalerel*[\widthof{\ensuremath{#1}}]{\kern-.6pt\bigwedge\kern-.6pt}%
    {\rule[-\textheight/2]{1ex}{\textheight}}
  }{\textheight}%
}{0.5ex}}%
\stackon[1pt]{#1}{\tmpbox}%
}
\DeclareMathOperator*{\argmin}{arg\,min}
\title{Uncertainty-aware Evaluation of Auxiliary Anomalies with the Expected Anomaly Posterior}
\author{%
  Lorenzo Perini\footnote{} \\
  DTAI lab \& Leuven.AI, \\
  KU Leuven, Belgium \\
  \And
  Maja Rudolph \\
  Bosch Center for AI, USA \\
  University of Wisconsin-Madison, USA\\
  \AND
  Sabrina Schmedding \\
  Bosch Center for AI, Germany \\
  \And
  Chen Qiu \\
  Bosch Center for AI, USA \\
}
\begin{document}

\maketitle

\def\thefootnote{*}\footnotetext{Work done during the internship at Bosch Center for AI. Correspondence: \href{mailto:lorenzo.perini@kuleuven.be}{\texttt{lorenzo.perini@kuleuven.be}}}
\def\thefootnote{\arabic{footnote}}

\begin{abstract}
  Anomaly detection is the task of identifying examples that do not behave as expected. Because anomalies are rare and unexpected events, collecting real anomalous examples is often challenging in several applications. In addition, learning an anomaly detector with limited (or no) anomalies often yields poor prediction performance. One option is to employ auxiliary synthetic anomalies to improve the model training. However, synthetic anomalies may be of poor quality: anomalies that are unrealistic or indistinguishable from normal samples may deteriorate the detector's performance. Unfortunately, no existing methods quantify the quality of auxiliary anomalies. We fill in this gap and propose the expected anomaly posterior (\ourmethod{}), an uncertainty-based score function that measures the quality of auxiliary anomalies by quantifying the total uncertainty of an anomaly detector. Experimentally on $40$ benchmark datasets of images and tabular data, we show that \ourmethod{} outperforms $12$ adapted data quality estimators in the majority of cases.
\end{abstract}

\section{Introduction}

Anomaly detection aims at identifying the examples that do not conform to the normal behaviour~\cite{chandola2009anomaly}. Anomalies are often connected to adverse events, such as defects in production lines~\cite{wang2024improving}, excess water usage~\cite{perini2023estimating}, failures in petroleum extraction~\cite{marti2015anomaly}, or breakdowns in wind turbines~\cite{perini2022transferring}. Detecting anomalies in time can reduce monetary costs and protect resources from harm.
For this reason, there has been significant effort to develop data-driven methods for anomaly detection.



Unfortunately, anomalies are inherently rare and sparse, which makes collecting them hard. 
As a result, the data used to train data-driven methods for anomaly detection only contains a limited number of anomalies.  
In autonomous driving, for instance, sensor malfunctions or unexpected pedestrian movements are infrequent but critical to address. As an additional challenge, available anomalies rarely represent all potential cases due to the unpredictable nature of these events. In financial fraud detection, new techniques and schemes are constantly emerging, meaning previously identified anomalies do not cover future fraudulent methods. These factors highlight the difficulty of obtaining comprehensive anomaly datasets, as new and unpredictable anomaly types are inherent to the very nature of these events.


With recent improvements in generative modeling (e.g. ~\cite{ho2020denoising,dhariwal2021diffusion}) it seems natural to introduce auxiliary anomalies, e.g. for training an anomaly detector \citep{murase2022algan}, or for model selection ~\citep{fung2023model}.  However, there are several failure cases for generated anomalies that should not be neglected. For one, auxiliary anomalies might be too similar to normal examples. For instance, the defects introduced into images of products might be imperceptible, making the image indistinguishable from a normal counterpart. On the other hand, the quality of a generated anomaly also deteriorates as it becomes too unrealistic, e.g. when the generated defect of a product is too severe. Including poor-quality anomalies for training a model is likely to harm its performance~\cite{hendrycks2019oe,qiu2022latent,li2023deep,li2023zero}. Although \citet{chen2021atom,ming2022poem} proposed sampling methods for selecting informative anomalies during training, there is no approach for quantifying the quality of auxiliary anomalies.

In this paper, we close this critical gap by introducing the \textbf{Expected Anomaly Posterior (\ourmethod{})}, the first example-wise score function that measures the quality of auxiliary anomalies. Our approach relies on a fundamental insight: high-quality auxiliary anomalies must fulfill two criteria — they must be (1) distinguishable from normal examples in the training data and (2) realistic, i.e. similar to the training examples (e.g., scratches only affect few pixels, leaving an anomalous image relatively similar to a normal one). Finding a balance between these two characteristics poses an inherent challenge. On one hand, auxiliary anomalies risk deteriorating an anomaly detector's performance if they closely resemble normal examples. On the other hand, they become less useful the more dissimilar from the training data.

Building upon this insight, we adopt a Bayesian framework to model the uncertainty of a detector's prediction. This framework accounts for both an example's dissimilarity from the normal class (via class-conditional probability) and its realism (via example density). The expectation of the posterior probability that an example is anomalous reflects our concept of the quality of an auxiliary anomaly: the approximation we derive in \Cref{sec:method} to compute the \ourmethod{} will give lower scores to indistinguishable and unrealistic anomalies.



In summary, we make three following contributions. 
\begin{itemize}[nolistsep,leftmargin=*]
\item In \Cref{sec:method}, we compute the expected anomaly posterior (\ourmethod{}), which measures the quality of an anomaly by accounting both for aleatoric and epistemic uncertainty.
\item In \Cref{sec:theory}, we provide a theoretical analysis of \ourmethod{}, including its properties and guarantees. 
\item In \Cref{sec:exp}, we run an extensive experimental analysis and show that \ourmethod{} enhances a detector's performance when using high-quality anomalies to enrich training or perform model selection.
\end{itemize}


\section{Related Work}
\paragraph{Anomaly detection.} Designing an anomaly detector requires developing a way to assign real-valued anomaly scores to the examples~\cite{chandola2009anomaly,han2022adbench}, where the higher the score, the more anomalous the example. Existing approaches often rely on heuristic intuitions about expected anomalous behavior~\cite{pang2021deep,qiu2023self}. Propagation-based detectors, such as those using proximity to training examples, assume similar instances share the same label (e.g., \textsc{SSDO})~\cite{vercruyssen2018semi}. Loss-based detectors, on the other hand, learn a decision boundary (e.g., a hypersphere over normals) and assign scores based on the distance to this boundary~\cite{ruff2019deep,zhou2021feature,gao2021connet,Qiu2022RaisingTB}. Self-supervised detectors learn models through solving auxiliary tasks and score anomalies according to model performance on self-supervised tasks~\cite{golan2018deep,bergman2020classification,qiu2021neural}. Recently, foundation models have enabled zero-shot anomaly detection~\cite{jeong2023winclip}, which overcomes the need to collect anomalies for training but still requires their use in model selection~\cite{fung2023model}.

\paragraph{Data quality.} Traditional quality score functions evaluate training examples by (1) defining a utility function that takes as input a subset of the training set and measures the performance of the model, and (2) finding a function that assigns a score to an example by quantifying its impact on the model's performance when included/excluded for training~\cite{yoon2020data,jiang2023opendataval}. Methods like leave-one-out (\textsc{Loo}) iteratively remove one example at a time to observe how test performance varies. Various techniques, such as \textsc{DataShap}~\cite{ghorbani2019data}, \textsc{BetaShap}~\cite{kwon2021beta}, \textsc{KNNShap}~\cite{jia2019efficient}, \textsc{DataBanzhaf}~\cite{wang2023data}, and \textsc{AME}~\cite{lin2022measuring}, compute the marginal contribution of an example by bootstrapping the training set and assessing its impact on model training. \textsc{DataOob} is an out-of-bag (OOB) evaluator that measures out-of-bag accuracy variation. Other methods like \textsc{Lava} and influence functions (\textsc{Inf}) quantify how the utility changes when a specific example is more weighted. The Supplement~\ref{sec:data_quality_estimators} provides a detailed overview. Unfortunately, all existing data quality estimators focus on evaluating the impact of training examples and are not tailored to estimate the quality of an external anomaly.

\section{Methodology}\label{sec:method}
In this Section, we introduce the problem setup and notations (\Cref{subsec:setup_and_notation}), and describe our proposed approach for quantifying the quality of auxiliary anomalies (\Cref{subsec:defake}).

\subsection{Problem setup}\label{subsec:setup_and_notation}
Let $(\Omega, \mathcal{F}, \Pp)$ be a probability space, and $X \colon \Omega \to \R^d$, $Y \colon \Omega \to \{0,1\}$ two random variables representing, respectively, feature vectors and class labels ($0$ for normals, $1$ for anomalies). A training dataset is an i.i.d. sample of pairs $D = \{(x_1,y_1), \dots, (x_n,y_n)\} \sim \Pp(X,Y)$ drawn from the joint distribution. Because of the rarity of anomalies, we assume to have only $m << n$ (labeled) examples from the anomaly class, in addition to $n-m$ (labeled) normal examples.

Since anomalies provide valuable training signals, but are so rare to acquire, \citet{hendrycks2019oe,murase2022algan} propose using auxiliary anomalies during training. With significant improvements in generative modeling, there are many candidate methods for generating synthetic anomalies to complement the training data. Our goal is to evaluate candidate synthetic anomalies with a quality score function $\phi$ such that higher scores indicate that the synthetic anomaly is useful for training a detector. Before formalizing our research task, we introduce the following definition.
\begin{definition}[Categorization of Anomalies]\label{def:categorizationanomalies}
Given the examples $\xreal{}, \xunr{}, \xind{} \in \R^d$, we define that 
\begin{itemize}[nolistsep,noitemsep,leftmargin=*]
\item $\xreal{}$ is a \emph{realistic anomaly} if it has high conditional probability and non-zero density
\begin{equation*}
    \Pp(Y=1|X=\xreal{}) \in [0.5, 1] \ \text{ and } \ \Pp(X=\xreal{}) > 0;
\end{equation*}
\item $\xunr{}$ is an \emph{unrealistic anomaly} if it has high conditional probability and null density 
\begin{equation*}
    \Pp(Y=1|X=\xunr{}) \in [0.5, 1] \ \text{ and } \ \Pp(X=\xunr{}) = 0 ;
\end{equation*}
\item $\xind{}$ is an \emph{indistinguishable anomaly} if it has null conditional probability and non-zero density
\begin{equation*}
    \Pp(Y=1|X=\xind{}) = 0 \ \text{ and } \ \Pp(X=\xind{}) > 0.
\end{equation*}
\end{itemize}
\end{definition}

An anomaly detector is a function $f \colon \R^d \times \{0,1\} \to \R$ that assigns a real-valued anomaly score $f(x)$ to any $x \in \R^d$. The detector $f$ is learned using the training set $D$, and can be used for estimating the conditional probability $\Pp(Y|X)$ by mapping the scores to $[0,1]$~\cite{kriegel2011interpreting}.


\begin{description}
    \item[Given:] $D$ with $m \ll n$ anomalies, a set of $l$ auxiliary anomalies $\{x \in \R^d \}$ , and a detector $f$;
    \item[Challenge:] Design a quality score function $\phi : \R^d \to \R$ for the auxiliary anomalies, such that for any realistic anomaly $\xreal \in l$ and any unrealistic or indistinguishable anomaly $\xunr{},\xind{} \in l$, the realistic anomaly has a higher quality score $\phi(\xreal)> \phi(\xunr{}), \phi(\xind{})$.
\end{description}

With this categorization, estimators for $\Pp(Y|X)$ alone cannot differentiate between $\xreal{}$ and $\xunr{}$, while estimators for $\Pp(X)$ alone cannot differentiate between $\xreal{}$ and $\xind{}$, thus not qualifying as good quality estimators.
Intuitively, a score must quantify the conditional probability to distinguish between $\xreal{}$ and $\xind{}$ but the estimate needs to account for an example's density. Roughly speaking, the lower an example's density the more uncertain the estimate, because the lack of similar training data prevents a model from learning the correct probability. This introduces an additional level of uncertainty (namely, epistemic), which requires a Bayesian perspective to be properly measured~\cite{hullermeier2021aleatoric,bengs2022pitfalls}.

\subsection{The Expected Anomaly Posterior}\label{subsec:defake}




Capturing a detector's uncertainty is challenging because one needs to account for (1) the example's proximity to the normal class (i.e., the aleatoric uncertainty) and (2) the lack of training data in the region where the example falls (i.e., the epistemic uncertainty). This is particularly complicated in anomaly detection because the epistemic uncertainty tends to be high for most anomalies, as they often fall in low-density regions~\cite{bengs2023second}.

We propose \ourmethod{}, a novel approach that estimates the quality of auxiliary anomalies by capturing an anomaly detector's uncertainty. The key idea is to model each example's probability of being an anomaly $p_x$. The quality score we propose is the expected posterior of this parameter.

\paragraph{Assumption.} For any $x$ the class conditional distribution $Y=1|X=x$ is a Bernoulli
\begin{equation*}
\Pp(Y|X=x) = \textsc{Bernoulli}(p_{x}), \quad p_{x} \sim \textsc{Beta}(\alpha_0,\beta_0).
\end{equation*}
The parameter $p_{x}$ can be interpreted as the probability of the example $x$ being an anomaly. With the Beta prior we can incorporate prior knowledge such as the expected ratio of anomalies in the data~\cite{perini2023estimating}.
Since we have at most one observation of $Y$ for each $x$, we follow \citet{charpentier2020posterior} and model the posterior over $p_x$ by conditioning on pseudo observations. 
Given $N$ pseudo observations $\bar{y}_1, \dots, \bar{y}_N$ hypothetically drawn from $\Pp(Y|X=x)$, the posterior,
\begin{equation}\label{eq:posterior_uncertainty}
    p_{x}| \bar{y}_1, \dots, \bar{y}_N \sim \textsc{Beta}(\alpha_0 + \alpha_1, \beta_0 + N - \alpha_1),
\end{equation}
is conjugate to the Beta prior, where $\alpha_1 = \sum_{i=1}^N \1 (\bar{y}_i=1)$ is the number of anomalies in the pseudo observations. That is, if we could sample $N$ labels for the same example $x$, i.e. $(x, y_1),\dots, (x, y_N)$, we would derive the posterior distribution of $Y|X=x$ by using a simple Bayes update (Eq.~\ref{eq:posterior_uncertainty}). However, sampling $N$ labels for the same example $x$ is practically impossible. Thus, we need to parametrize $\alpha_1$. Roughly speaking, if we drew $n$ training examples from $\Pp(X,Y)$ we would expect to draw $N = n \cdot \Pp(X=x)$ examples with features $x$, among which $\Pp(Y=1|X=x)$ are anomalies:
\begin{equation}
    \alpha_1 \approx n \times \reallywidehat{\Pp(Y\!=\!1,X\!=\!x)} \approx n \times \underbrace{\reallywidehat{\Pp(Y\!=\!1|X\!=\!x)}}_{\text{conditional probability}} \times \underbrace{\reallywidehat{\Pp(X\!=\!x)}}_{\text{data density}} 
\end{equation}
where $\hat{\cdot}$ indicates that the quantity is estimated. We describe how we compute both terms below. 
The expectation of the posterior in \Cref{eq:posterior_uncertainty} reflects the quality of an auxiliary anomaly $x$: if the expected posterior is high, the evidence is enough to rely on the expected conditional probability for evaluating the auxiliary anomaly, while if it is low, the quality reflects our prior belief.

\paragraph{Estimating the data density.}
Computing $\reallywidehat{\Pp(X\!=\!x)}$ has two main challenges. First, most traditional density estimators suffer the well-known curse of dimensionality~\cite{verleysen2005curse,bengio2005curse}. Second, deep estimators (e.g., Normalizing Flows~\cite{kobyzev2020normalizing}) are prohibitively time-consuming to be employed for data quality scores. Thus, \ourmethod{} relies on the rarity score~\cite{han2022rarity}, which is fast to compute and weakly affected by the curse of dimensionality. 
The rarity score (1) creates k-NN spheres centered on each training example, and (2) assigns the smallest radius of the sphere that contains the given synthetic example. If the synthetic example falls outside of all spheres, it is considered too uncommon and gets null rarity. 
We use the rarity score $r_{\hat{k}}$ with an estimated $\hat{k}$ to estimate the data density.\footnote{We explain how we compute $\hat{k}$ in \Cref{sec:rarity_score} in the Supplement.}
Intuitively, the density behaves as the inverse of the rarity score: highly uncommon examples should have low density. Thus, we take the reciprocal value of the rarity score and normalize it using the training rarity scores:
\begin{equation}\label{eq:density}
    \reallywidehat{\Pp(\!X\!=\!x\!)} = \frac{1 \slash r_{\hat{k}}(x)}{1 \slash r_{\hat{k}}(x)+\sum_{i=1}^n 1 \slash r_{\hat{k}}(x_i)}.
\end{equation}

\paragraph{Estimating the conditional probability.}
Computing $\reallywidehat{\Pp(Y\!=\!1|X\!=\!x)}$ in anomaly detection is a hard task because (1) class probabilities are generally unreliable for imbalanced classification tasks~\cite{wallace2012class,tian2020posterior}, and (2) the available anomalies might be non-representative of the whole anomaly class (i.e., we have access to a biased set). This makes traditional calibration techniques often impractical~\cite{silva2023classifier,deng2022cadet}. However, we mainly care about having probabilities that satisfy two properties. First, they must be coherent with the detector's prediction, namely a predicted anomaly (normal) needs a probability greater (lower) than $0.5$. Second, we want the proportion of predicted anomalies to match the expected proportion of true anomalies. This guarantees that, if the detector's ranking is accurate, the class predictions are optimally computed.

For this task, we employ a squashing scaler~\cite{vercruyssen2018semi} to map the anomaly scores to $[0,1]$ probability values
\begin{equation}\label{eq:classconditionaprob}
\reallywidehat{\Pp(\!Y\!=\!1|X\!=\!x\!)} = 1-2^{-\left(\frac{f(x)}{\lambda}\right)^2}
\end{equation}
where $\lambda$ is the detector's decision threshold which we set such that the number of training examples with probability $> 0.5$ (after mapping) is equal to the number of training anomalies $m$~\cite{perini2023estimating}.

\paragraph{Computing the quality scores.} Using our point estimates for the data density and the conditional probability, the parametrized posterior distribution $p_{x}| \bar{y}_1, \dots, \bar{y}_N$ can be computed by substituting $\alpha_1 = n \reallywidehat{\Pp(Y\!=\!1|X\!=\!x)} \reallywidehat{\Pp(X\!=\!x)}$ and $N = n \reallywidehat{\Pp(X\!=\!x)}$ to Eq.~\ref{eq:posterior_uncertainty}. Finally, we compute the quality of an auxiliary anomaly $x$ by taking its expectation
\begin{align*}
\phi(x) = \E[p_{x}| \bar{y}_1, \dots, \bar{y}_N] = \frac{\alpha_0 + n \reallywidehat{\Pp(X=x)} \reallywidehat{\Pp(Y=1|X=x)}}{\alpha_0 + \beta_0 + n \reallywidehat{\Pp(X=x)}}.
\end{align*}

\section{Theoretical Analysis of \ourmethod{}}
\label{sec:theory}
We theoretically investigate two tasks. First, we illustrate the main properties of \ourmethod{}, namely how it behaves when subject to (1) large training sets ($n \to +\infty)$, (2) small training sets or zero-density examples, (3) high-class conditional probabilities.
Second, we answer the following question: \emph{Given a realistic anomaly $\xreal{}$, an indistinguishable anomaly $\xind{}$ and an unrealistic anomaly $\xunr{}$ as in \Cref{def:categorizationanomalies}, does \ourmethod{} rank $\phi(\xreal{})>\phi(\xind{}),\phi(\xunr{})$?
}

\subsubsection*{i) \bf{\ourmethod{} has three relevant properties:}}

\emph{P1) Convergence to class conditional probabilities.} The number of training examples indicates how strong the empirical evidence is. That is, the detector $f$ has enough evidence to estimate properly the class conditional probability. Thus, for high $n$, \ourmethod{} converges to the class conditional probability
\begin{equation*}
    \phi(x) \to \reallywidehat{\Pp(Y\!=\!1|X\!=\!x)} \qquad \text{for } n \to +\infty;
\end{equation*}

\emph{P2) Convergence to the prior's mean.} No empirical evidence implies that the posterior remains equal to the prior. Thus, \ourmethod{} assigns the prior's mean for relatively small $n$ or a null-density region, 
\begin{equation*}
    \phi(x) \to \frac{\alpha_0}{\alpha_0+\beta_0} \qquad \text{for } n \to 0 \ \text{ or } \ \reallywidehat{\Pp(X=x)} \to 0.
\end{equation*}

\emph{P3) The quality of distinguishable anomalies increases with their density.} Given $\reallywidehat{\Pp(Y=1|X=x)} \approx 1$ for an example $x$, its quality depends only on its density: the closer/more similar to the training examples, the higher the density, the higher its quality:
\begin{equation*}
    \phi(x) \approx 1 - \frac{\beta_0}{\alpha_0+\beta_0 + n\Pp(X=x)}.
\end{equation*}

\subsubsection*{ii) \bf{\ourmethod{}' ranking guarantee.}}
We show that \ourmethod{} ranks the anomalies as (1st) realistic, (2nd) unrealistic, and (3rd) indistinguishable.

\begin{theorem}
Let $\xreal{}, \xunr{}, \xind{} \in \R^d$ be, respectively, a realistic, unrealistic, and indistinguishable anomaly. If the estimators in Eq.~\ref{eq:density} and Eq.~\ref{eq:classconditionaprob} satisfy the properties of \Cref{def:categorizationanomalies}, then
\begin{equation}
\frac{\alpha_0}{\alpha_0+\beta_0}<0.5 \implies \phi(\xreal{})>\phi(\xunr{}) > \phi(\xind{}).
\end{equation}
\end{theorem}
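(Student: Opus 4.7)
The plan is to substitute the three category conditions from \Cref{def:categorizationanomalies} directly into the closed form
\[
\phi(x) = \frac{\alpha_0 + n\,\reallywidehat{\Pp(X=x)}\,\reallywidehat{\Pp(Y=1|X=x)}}{\alpha_0 + \beta_0 + n\,\reallywidehat{\Pp(X=x)}},
\]
and then reduce each of the two inequalities $\phi(\xreal{}) > \phi(\xunr{})$ and $\phi(\xunr{}) > \phi(\xind{})$ to elementary algebra on positive quantities. The whole argument is essentially a cross-multiplication argument, so the only real ``work'' is locating the single place where the hypothesis $\alpha_0/(\alpha_0+\beta_0) < 0.5$ is actually used.

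First I would evaluate $\phi$ at $\xunr{}$ and $\xind{}$, because the category conditions kill one of the two factors in the numerator's update term. For $\xunr{}$ the density is null, so the $n$-dependent contributions vanish in both numerator and denominator and $\phi(\xunr{}) = \alpha_0/(\alpha_0+\beta_0)$, i.e.\ property P2. For $\xind{}$ the conditional probability is null, so the numerator collapses to $\alpha_0$ while the denominator still carries the positive term $n\,\reallywidehat{\Pp(X=\xind{})} > 0$, giving $\phi(\xind{}) = \alpha_0/(\alpha_0+\beta_0+n\,\reallywidehat{\Pp(X=\xind{})}) < \alpha_0/(\alpha_0+\beta_0) = \phi(\xunr{})$. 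This yields the second half of the ranking immediately, and uses nothing beyond the positivity of $\alpha_0$ and of $n\,\reallywidehat{\Pp(X=\xind{})}$.

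For the remaining inequality $\phi(\xreal{}) > \phi(\xunr{})$, write $p_R := \reallywidehat{\Pp(X=\xreal{})} > 0$ and $q_R := \reallywidehat{\Pp(Y=1|X=\xreal{})} \ge 0.5$. Cross-multiplying $\frac{\alpha_0 + n p_R q_R}{\alpha_0+\beta_0+n p_R} > \frac{\alpha_0}{\alpha_0+\beta_0}$ against the (positive) denominators and cancelling the common term $\alpha_0(\alpha_0+\beta_0)$ reduces the claim to $q_R(\alpha_0+\beta_0) > \alpha_0$, i.e.\ $q_R > \alpha_0/(\alpha_0+\beta_0)$. This is exactly where the hypothesis enters: by assumption $\alpha_0/(\alpha_0+\beta_0) < 0.5 \le q_R$, so the inequality holds with room to spare. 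Chaining this with the previous step gives $\phi(\xreal{}) > \phi(\xunr{}) > \phi(\xind{})$.

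I do not foresee a real obstacle: every step is a one-line manipulation, and the sign of each expression is controlled by the category assumptions. The only thing to be careful about is to state up front that $\alpha_0,\beta_0>0$ and $n>0$ so that all denominators are strictly positive and all cross-multiplications preserve the direction of the inequalities, and to emphasise that the prior hypothesis is tight in the sense that it is invoked precisely to rule out priors that would already assign the prior mean a value above the minimum $q_R = 0.5$ allowed for realistic anomalies.
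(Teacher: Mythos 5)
Your proposal is correct and follows essentially the same route as the paper's proof: evaluate $\phi$ at the unrealistic and indistinguishable anomalies using the null-density and null-conditional-probability conditions to get the second inequality, then reduce $\phi(\xreal{})>\phi(\xunr{})$ by cross-multiplication to $\reallywidehat{\Pp(Y\!=\!1|X\!=\!\xreal{})}>\alpha_0/(\alpha_0+\beta_0)$, which is where the prior hypothesis is used. Your handling of the boundary case ($\alpha_0/(\alpha_0+\beta_0)<0.5\le q_R$ giving a strict inequality) is in fact slightly more careful than the paper's, which writes $\Pp(Y\!=\!1|X\!=\!\xreal{})>0.5$ despite the definition allowing equality.
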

\begin{proof}
Using the definition of indistinguishable and unrealistic anomaly, we immediately conclude
\begin{equation*}
    \phi(\xunr{}) = \frac{\alpha_0}{\alpha_0+\beta_0} >  \frac{\alpha_0}{\alpha_0+\beta_0 + n\Pp(X=\xind{})} = \phi(\xind{})
\end{equation*}
because $\Pp(X=\xind{})>0$. As a second step, we assume that $\frac{\alpha_0}{\alpha_0+\beta_0}<0.5$ and show algebraically that
\begin{equation*}
\begin{split}
&\phi(\xreal{})>\phi(\xunr{}) \!\iff \!\phi(\xreal{})-\phi(\xunr{})>0  \!\iff \!\frac{\alpha_0 + n \Pp(X\!=\!\xreal{}) \Pp(Y\!=\!1|X\!=\!\xreal{})}{\alpha_0 + \beta_0 + n \Pp(X\!=\!\xreal{})}\!-\!\frac{\alpha_0}{\alpha_0+\beta_0}\!>\!0  \\
&\iff n\Pp(X\!=\!\xreal{})\left[(\alpha_0+\beta_0)\Pp(Y\!=\!1|X\!=\!\xreal{}) \!-\! \alpha_0\right]\!>0 \iff \Pp(Y\!=\!1|X\!=\!\xreal{}) > \frac{\alpha_0}{\alpha_0+\beta_0},
\end{split}
\end{equation*}
which holds as $\Pp(Y\!=\!1|X\!=\!\xreal{})>0.5> \frac{\alpha_0}{\alpha_0+\beta_0}$.
\end{proof}

\section{Experiments}\label{sec:exp}

We empirically evaluate three aspects of our method \ourmethod{}: (a) whether it measures properly the quality of auxiliary anomalies, and (b) its impact on selecting auxiliary anomalies for learning a model or (c) for model selection. To this end, we address the following five experimental questions:

\begin{itemize}[nolistsep]
    \item[Q1.] How does \ourmethod{} compare to existing methods at assigning quality scores?
    \item[Q2.] How does a model’s performance vary when including \emph{high-quality} anomalies for training?
    \item[Q3.] How does a model’s performance vary when including \emph{low-quality} anomalies for training?
    \item[Q4.] How does the performance of a CLIP-based zero-shot anomaly detection method vary when using the selected auxiliary anomalies for prompt tuning?
    \item[Q5.] How do \ourmethod{}' scores vary for different priors?
\end{itemize}

\subsection{Experimental Setup}

\paragraph{Baselines.}
We compare \ourmethod{}\footnote{Code is available at: URL provided upon acceptance.} with $12$ adapted baselines:
\textsc{Loo}, \textsc{kNNShap}~\cite{jia2019efficient}, \textsc{DataBanzhaf}~\cite{wang2023data}, \textsc{AME}~\cite{lin2022measuring}, \textsc{LavaEv}~\cite{just2023lava}, \textsc{Inf}~\cite{feldman2020neural}, and \textsc{DataOob}~\cite{kwon2023data} are existing data quality evaluators that measure the impact of a training example on the model performance. We adapt these methods by including each auxiliary anomaly (individually) in the training set and evaluating its contribution. \textsc{RandomEv} assigns uniform random scores to each auxiliary anomaly. \textsc{Rarity}~\cite{han2022rarity} computes the rarity score of each auxiliary anomaly. Finally, we include the estimators for the data density $P_x$, the class conditional probability $P_{y|x}$, and a linear combination of them $P_{y|x}+ NP_x$.

\paragraph{Data.}

We carry out our study on $40$ datasets, including $15$ widely used benchmark image datasets (\textsc{MvTec})~\cite{bergmann2019mvtec}, $3$ industrial image datasets for Surface Defect Inspection (SDI)~\cite{Wang_2022_BMVC}, and an additional $22$ benchmark tabular datasets for anomaly detection with semantically useful anomalies, commonly referenced in the literature~\cite{han2022adbench}. These datasets vary in size, feature count, and anomaly proportion.

For each dataset, we construct an auxiliary set of $l$ anomalies by combining realistic, indistinguishable, and unrealistic anomalies ($\frac{l}{3}$ each). Realistic anomalies are labeled anomalies provided with the dataset, indistinguishable anomalies are labeled normal examples with swapped labels, and unrealistic anomalies come from other datasets. Specifically, to collect unrealistic anomalies we randomly select $5$ datasets out of the $40$, subsample them to a specific example count, and fix their dimensionality to $d$ using random projections (either extending or reducing it). \emph{Pseudo-quality} labels ``good'' and ``poor'' are assigned to real anomalies and the other two groups, respectively, reflecting the ground truth where real anomalies should have high-quality scores.



\paragraph{Setup.}
For each dataset, we proceed as follows: 
(i) We create a balanced test set by adding random normal examples and $50\%$ of available anomalies; 
(ii) We generate a set of $l$ auxiliary anomalies as described above with $\frac{l}{3} = 40\%$ of available anomalies;
(iii) We create a training set by adding $10\%$ of available anomalies and all remaining normal examples to the training set.
(iv) We apply all methods to evaluate the external set of anomalies, using the training set for validation when required (as $m \ll n$, we avoid partitioning the training set); 
To mitigate noise, steps (i)-(iv) are repeated $10$ times with different seeds, resulting in a total of $4000$ experiments (datasets, methods, seeds). While computing \ourmethod{} is fast, the baselines have high computational costs because they train a model several times. To run all experiments, we use an internal cluster of six 24- or 32-thread machines (128 GB of memory). The experiments finish in $\sim 72$ hours.


\paragraph{Models and Hyperparameters.}
For all baselines, we use \textsc{SSDO}~\cite{vercruyssen2018semi} as the underlying anomaly detector $f$ with $k=10$ and Isolation Forest~\cite{liu2008isolation} as prior. When exposed to selected auxiliary anomalies, we employ an \textsc{SVM} with RBF kernel (for images) and a \textsc{Random Forest} (for tabular data) to make the normal vs. abnormal classification. For images, we use the pre-trained \textsc{ViT-B-16-SigLIP}~\cite{zhai2023sigmoid} to extract the features from images and use them as inputs to \ourmethod{} and all baselines.
Our method \ourmethod{} has one hyperparameter, namely the prior $\alpha_0, \beta_0$, which we set to $\frac{m}{n}$ (the proportion of anomalies in the training set) and $1-\frac{m}{n}$. Intuitively, this corresponds to the expected proportion of (real) anomalies if an external dataset was sampled from $\Pp(X,Y)$. The baselines\footnote{Code: \url{https://github.com/opendataval}} have the following hyperparameters: \textsc{kNNShap} and \textsc{Rarity} have $k=10$, \textsc{DataBanzhaf}, \textsc{AME}, \textsc{Inf} and \textsc{DataOob} use $50$ models. All other hyperparameters are set as default~\cite{soenen2021effect}.

\paragraph{Evaluation Metrics.}
We employ \textbf{four} evaluation metrics. First, we use the Area Under the Receiving Operator Curve (\textbf{AUC$_{\textsc{qlt}}$}) to evaluate the methods' ability to rank good-quality examples higher than poor-quality ones based on quality scores compared to the pseudo-quality labels. 
Second, we qualitatively analyze the impact of using auxiliary anomalies in training a model, showing the learning curves (\textbf{LC$_{\textsc{g}}$}) with the number of added anomalies following the ranking of quality scores on the x-axis and the test accuracy on the y-axis. We compute the area under the learning curve up to $\frac{1}{3}$ of ranked anomalies (\textbf{AULC$_{\textsc{g}}$}) and the test accuracy after including top $\frac{1}{3}$ of ranked anomalies (\textbf{ACC$_{\textsc{g}}$}). Similarly, we compute the \textbf{LC$_{\textsc{p}}$} following the methods' inverse ordering and measure the \textbf{AULC$_{\textsc{p}}$} of including up to $\frac{2}{3}$ of inversely-ranked anomalies, where lower values are desirable. Also, rankings from $1$ (best) to $13$ (worst) are assigned for each method in every experiment, which are denoted by a $r$ in front in Table~\ref{tab:aggregated_results}.

\begin{table*}[htpb]
\centering
\small
\caption{Summary of the results obtained by the $13$ methods over $18$ image (above) and $22$ tabular (below) datasets. Columns $6-10$ show the ranking values for the $4$ metrics employed (columns $2-5$) and their average. For metrics desiring lower values, we mark with a $\downarrow$. Overall, \ourmethod{} achieves the best performance and ranking position for most evaluation metrics as well as the best avg. ranking.}\label{tab:aggregated_results}
\resizebox{\linewidth}{!}{ 
\begin{tabular}{l|c|c|c|c||c|c|c|c||c}
\multicolumn{10}{c}{\bf{\textsc{18 Image Datasets}}}\\
\toprule
\textsc{Evaluator} & \textbf{AUC$_{\textsc{qlt}}$} & \textbf{AULC$_{\textsc{g}}$} & \textbf{ACC$_{\textsc{g}}$} & \textbf{AULC$_{\textsc{p}}$}($\downarrow$) 
& \textbf{rAUC$_{\textsc{qlt}}$} & \textbf{rAULC$_{\textsc{g}}$} & \textbf{rACC$_{\textsc{g}}$} & \textbf{rAULC$_{\textsc{p}}$}  & \textbf{\textsc{Avg. Rank}} \\
\midrule
\bf{\ourmethod{}} & \bf{0.803}  & \bf{0.717} & \bf{0.833} & 0.688 &  \bf{1.99}  & \bf{3.94} & \bf{3.12} & \bf{3.66}  & \bf{3.18} \\
\textsc{Rarity} & 0.681  & 0.698 & 0.786 & 0.738  & 4.21 & 5.07 & 5.34 & 7.96 & 5.65 \\
\textsc{Lava} & 0.742  & 0.665 & 0.755 & 0.709  & 2.70  & 8.17 & 7.44 & 4.71 & 5.75 \\
$P_{y|x}+NP_x$ & 0.669 & 0.700 & 0.795 & 0.729 & 4.80 & 5.60 & 5.61 & 7.13 & 5.79 \\
\textsc{Loo} & 0.537  & 0.693 & 0.777 & 0.694  & 7.22& 5.86 & 6.56 & 6.14  & 6.44 \\
\textsc{RandomEv} & 0.491 & 0.696 & 0.793 & 0.756 & 8.92  & 6.21 & 5.93 & 9.90 & 7.38 \\
\textsc{DataOob} & 0.505 & 0.685 & 0.739 & \bf{0.668} & 8.66 & 7.53 & 9.55 & 4.35& 7.52 \\
\textsc{kNNShap} & 0.509  & 0.695 & 0.794 & 0.754  & 8.58 & 6.36 & 5.95 & 9.44 & 7.58 \\ 
\textsc{AME} & 0.512 & 0.693 & 0.793 & 0.756& 8.56 & 6.42 & 5.85 & 9.94 &  7.69 \\
\textsc{Inf} & 0.488  & 0.681 & 0.778 & 0.744  & 8.87 & 7.01 & 6.73 & 8.55 &  7.79 \\
$P_{y|x}$ & 0.494 & 0.671 & 0.710 & 0.644 & 9.30 & 8.55 & 10.62 & 3.04 & 7.88 \\
\textsc{DataBanzhaf} & 0.500 & 0.668 & 0.775 & 0.748  & 8.71  & 8.01 & 6.68 & 8.54  & 7.98 \\
$P_x$ & 0.502 & 0.535 & 0.600 & 0.733 & 8.59 & 12.37 & 11.72 & 7.72 & 10.10 \\
\midrule
\multicolumn{10}{c}{\bf{\textsc{22 Tabular Datasets}}}\\
\midrule
\bf{\ourmethod{}} & \bf{0.821} & 0.779 & \bf{0.839} & \bf{0.717}  & \bf{1.91}  & \bf{4.27} & \bf{4.28} & \bf{2.11} & \bf{3.14} \\
\textsc{Rarity} & 0.724 & \bf{0.782} & \bf{0.839} & 0.753  & 3.59  & 4.42 & 4.91 & 4.92 & 4.46 \\
\textsc{Lava} & 0.723 & 0.744 & 0.794 & 0.747 & 3.62  & 7.48 & 7.17 & 4.08  & 5.59 \\
$P_{y|x}+NP_x$ & 0.676 & 0.758 & 0.815 & 0.771 & 4.93 & 6.16 & 6.23 & 6.16 & 5.87 \\
\textsc{kNNShap} & 0.541  & 0.770 & 0.825 & 0.805  & 7.61  & 5.24 & 5.29 & 9.15 & 6.82 \\
\textsc{RandomEv} & 0.502 & 0.773 & 0.827 & 0.809  & 8.25  & 4.92 & 5.33 & 9.72  & 7.06 \\
\textsc{AME} & 0.498  & 0.772 & 0.827 & 0.809  & 8.53  & 5.05 & 5.37 & 9.75  & 7.17 \\
\textsc{Loo} & 0.504  & 0.750 & 0.798 & 0.792 & 7.90 & 6.87 & 7.10 & 7.77  & 7.41 \\
$P_x$ & 0.554 & 0.703 & 0.768 & 0.752 & 6.77 & 10.39 & 9.29 & 4.69 & 7.78 \\
$P_{y|x}$ & 0.533 & 0.712 & 0.748 & 0.753 & 8.20 & 10.56 & 10.66 & 4.30 & 8.43 \\
\textsc{DataOob} & 0.513 & 0.729 & 0.768 & 0.785 & 8.48  & 9.39 & 9.84 & 7.43  & 8.78 \\
\textsc{Inf} & 0.434  & 0.742 & 0.795 & 0.816  & 10.29  & 7.78 & 7.37 & 10.40  & 8.96 \\
\textsc{DataBanzhaf} & 0.415  & 0.736 & 0.789 & 0.817  & 10.92  & 8.47 & 8.16 & 10.53 & 9.52 \\
\bottomrule
\end{tabular}
}
\vspace{-.1in}
\end{table*}
\subsection{Experimental Results}

\paragraph{Q1. \ourmethod{} vs baselines at assigning quality scores.}

Figure~\ref{fig:q1_auroc_qs} shows the methods' mean AUC$_{\textsc{qlt}}$ on both image (left) and tabular data (right). On images, \ourmethod{} outperforms all baselines on $13$ out of $18$ datasets, achieving an average AUC$_{\textsc{qlt}}$ significantly higher than \textsc{Lava} and \textsc{Rarity} by $6$ and $12$ percentage points, as shown in Table~\ref{tab:aggregated_results}. Also, \ourmethod{} consistently obtains the lowest (best) average ranking positions ($1.99$ for rAUC$_{\textsc{qlt}}$). On tabular data, \ourmethod{} 
obtains an average AUC$_{\textsc{qlt}} = 0.821$, which is around $10$ percentage points higher than the runner-ups. Also, \ourmethod{} outperforms \textsc{Lava} and \textsc{Rarity} on $18$ and $17$ datasets and obtains the best average ranking ($1.91$).


Interestingly, only \ourmethod{}, \textsc{Lava}, \textsc{Rarity}, and $P_{y|x}+NP_x$ achieve performance better than random, while other methods get average AUC$_{\textsc{qlt}}$ around $0.5$, which highlights their inability to distinguish good and poor auxiliary anomalies consistently. As a second remark, \ourmethod{} performs lower than random for the dataset \textsc{Tile}. This happens because the defective images are extremely different than the normal images, thus resulting in real anomalies falling in zero-density regions, which our method would categorize as unrealistic.

\begin{figure*}[t!]
\centerline{\includegraphics[width=\textwidth]{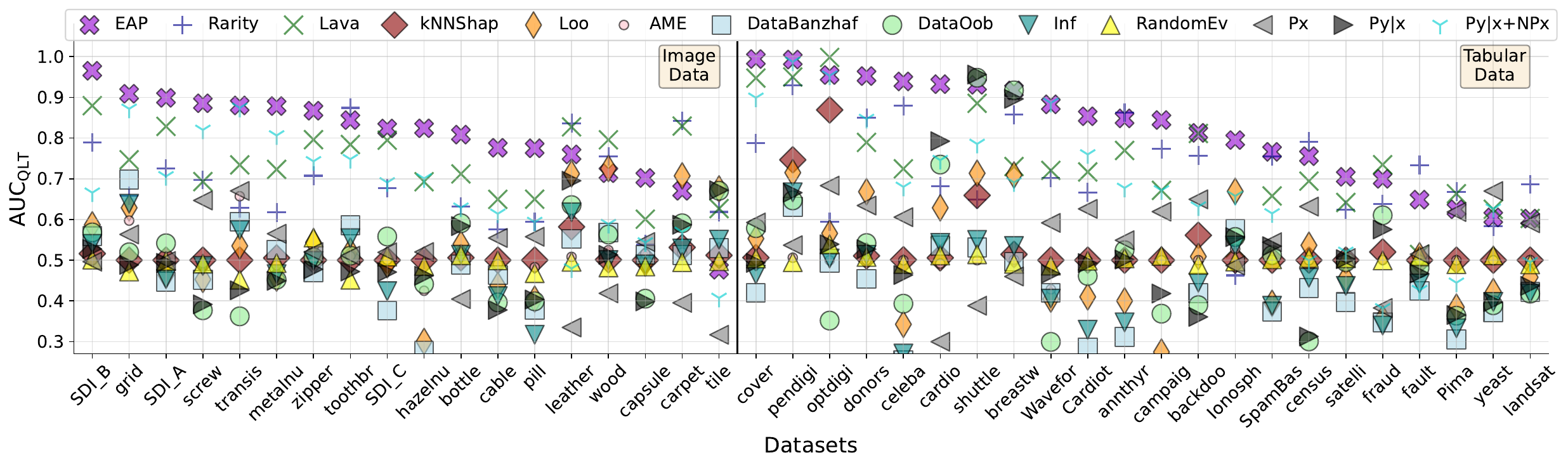}}
\vspace{-.12in}
\caption{The plot illustrates the average AUC$_{\textsc{qlt}}$ obtained by each method on a per-dataset basis (left for image data, right for tabular data). \ourmethod{} achieves the highest (best) performance for most datasets, beating the runner-ups \textsc{Rarity} and \textsc{Lava} on, respectively, $30$ and $31$ datasets out of $40$.
}\label{fig:q1_auroc_qs}
\vspace{-.03in}
\end{figure*}

\begin{figure*}[t!]
\centerline{\includegraphics[width=\textwidth]{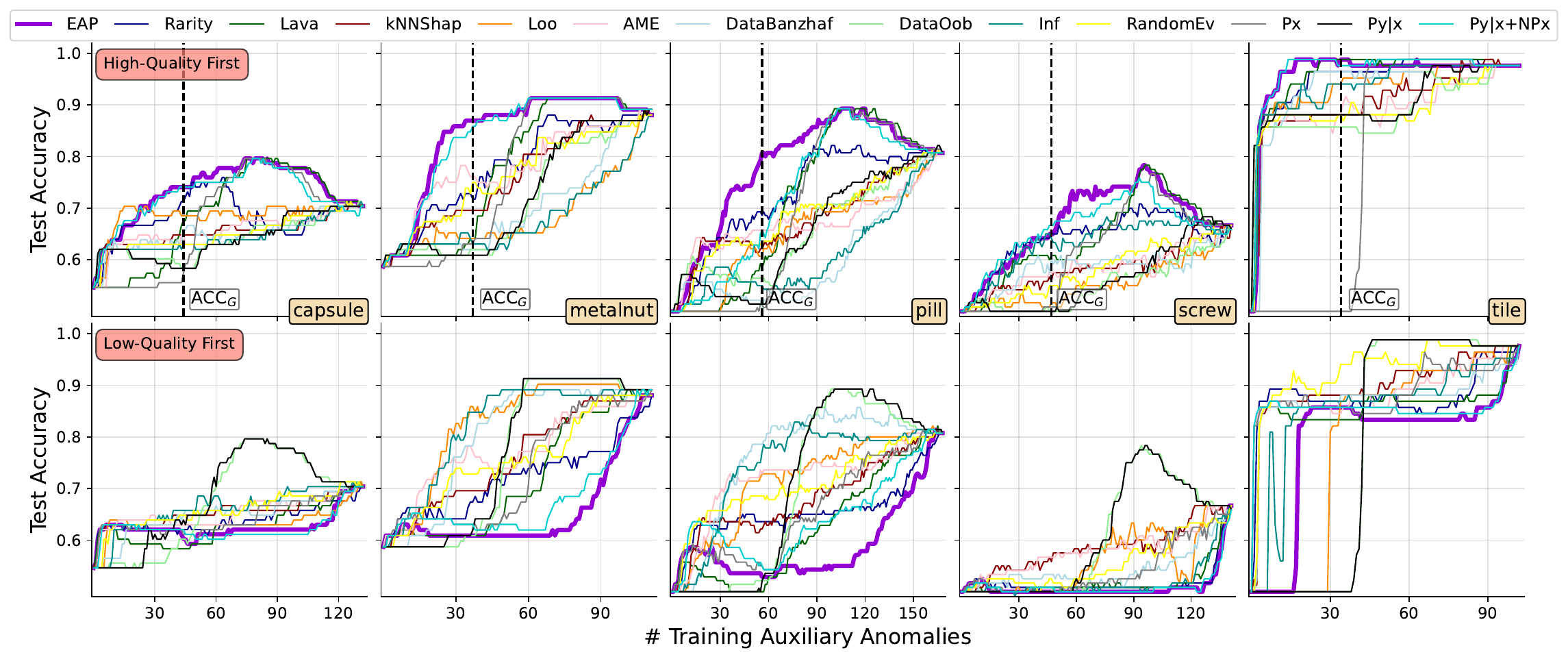}}
\vspace{-.12in}
\caption{Learning curve (LC) obtained by following the method's ordering (top) and inverse ordering (bottom) for five representative image datasets. Top: \ourmethod{}' LC$_{\textsc{g}}$ grows sooner (i.e., better) than the other methods', which confirms that including high-quality anomalies in the training set has a larger impact on the test performance. Bottom: \ourmethod{}' LC$_{\textsc{p}}$ rises later (i.e., better) than most baselines', showing that low-quality anomalies have a comparatively modest impact on the test performance.
}\label{fig:q2_aucl}
\vspace{-.03in}
\end{figure*}

\paragraph{Q2. Including high-quality anomalies in training.}

We measure how the performance of a model varies when introducing the top $\frac{1}{3}$ of auxiliary anomalies into the training following the methods' rankings. Figure~\ref{fig:q2_aucl} (top) shows the learning curves (LC$_{\textsc{g}}$) for five representative image datasets. Overall, including high-ranked anomalies first has the claimed impact on the model's performance: the learning curve grows sooner for \ourmethod{} compared to all baselines. Consequently,  \ourmethod{} obtains an AULC$_{\textsc{g}}$ that is, on average, higher than all the baselines by between $2$ (vs. \textsc{Rarity} and $P_{y|x}+NP_x$) and $5$ (vs. \textsc{Lava}) percentage points. After including one-third of the auxiliary anomalies for training, \ourmethod{} shows an average improvement on the test performance (ACC$_{\textsc{g}}$) of $4$ to $10$ points over all baselines, standing out as the only method significantly better than \textsc{RandomEv}.
Moreover, Table~\ref{tab:aggregated_results} shows that \ourmethod{} achieves the best average ranking for both AULC$_{\textsc{g}}$ (i.e., $3.94$) and ACC$_{\textsc{g}}$ (i.e., $3.12$).

On tabular data, Table~\ref{tab:aggregated_results} shows that \ourmethod{} achieves an average AULC$_{\textsc{g}}$ slightly lower than \textsc{Rarity} ($0.779$ vs $0.782$) and a similar ACC$_{\textsc{g}}$ (both $0.839$). This occurs because, for most tabular datasets, \textsc{Rarity} assigns high-quality scores to the unrealistic anomalies which are obviously different than normal data. When including them for training, it yields an improvement of the \textsc{Random Forest}'s test accuracy surprisingly. From a ranking perspective, \ourmethod{} obtains the lowest (best) average with rAULC$_{\textsc{g}} = 4.27$ (vs. \textsc{Rarity}'s $4.42$), and rACC$_{\textsc{g}} = 4.28$ (vs. \textsc{Rarity}'s $4.91$).

\begin{table*}[htpb]
\setlength{\tabcolsep}{3pt}
\caption{Test AUCs ($\%$) of prompt tuning with auxiliary anomalies selected by each baseline on \textsc{MvTec}. \ourmethod{} performs the best on $12$ of $15$ classes and achieves the highest average AUC.}
\label{tab:prompt_tuning}
    \small
    \centering
     \resizebox{\linewidth}{!}{ 
    \begin{tabular}{lccccccccccccccc|c}
        \toprule
Evaluator       &bottle&	cable&	capsule&	carpet&	grid	&hazel	&leather	&metal	&pill	&screw	&tile	&tooth	&transis	&wood	&zipper	&avg\\
\midrule
\textsc{KNNShap}	&82.4	&56.7	&51.4	&82.1	&58.8	&72.7	&92.1	&51.7	&49.1	&60.2	&69.2	&\bf77.0	&\bf77.1	&72.7	&44.7	&66.5\\
\textsc{AME}	&82.4	&56.7	&51.4	&82.1	&71.2	&72.7	&98.4	&51.7	&49.1	&60.2	&69.2	&75.4	&68.3	&72.7	&44.7	&67.1\\
\textsc{LOO}	&82.4	&60.7	&64.4	&85.8	&58.8	&68.2	&\bf98.8	&51.7	&49.1	&60.4	&69.2	&\bf77.0	&62.1	&72.7	&44.7	&67.1\\
\textsc{DataOob}	&82.4	&60.7	&63.7	&85.8	&71.2	&72.7	&\bf98.8	&51.7	&49.1	&60.4	&69.2	&75.4	&62.1	&72.7	&44.7	&68.0\\
$P_{y|x}$	&82.4	&60.6	&51.2	&85.8	&71.2	&68.2	&98.4	&56.2	&65.6	&60.4	&69.2	&75.4	&62.1	&72.7	&44.7	&68.3\\
\textsc{INF}	&\bf95.3	&56.7	&45.7	&\bf96.1	&83.6	&68.2	&\bf98.8	&51.7	&\bf65.7	&60.4	&69.2	&75.4	&62.1	&93.0	&\bf75.9	&73.2\\
\textsc{DataBanz}	&\bf95.3	&41.1	&64.4	&\bf96.1	&83.6	&72.7	&\bf98.8	&51.7	&65.7	&60.4	&84.9	&75.4	&62.1	&93.0	&\bf75.9	&74.7\\
\textsc{Randomev}	&\bf95.3	&69.2	&63.7	&\bf96.1	&83.1	&68.2	&\bf98.8	&56.2	&\bf65.7	&60.4	&87.8	&75.4	&62.1	&\bf96.4	&\bf75.9	&76.9\\
\textsc{Rarity}	&94.3	&69.2	&64.4	&\bf96.1	&83.6	&\bf83.8	&\bf98.8	&45.1	&63.5	&60.4	&\bf91.8	&75.4	&\bf77.1	&\bf96.4	&\bf75.9	&78.4\\
$P_x$	&94.3	&70.2	&\bf64.6	&\bf96.1	&83.6	&76.7	&86.9	&90.8	&63.5	&60.4	&84.9	&75.4	&\bf77.1	&93.0	&\bf75.9	&79.6\\
\textsc{Lava}	&94.3	&\bf70.3	&64.4	&\bf96.1	&83.6	&\bf83.8	&98.4	&\bf90.9	&63.5	&60.0	&87.8	&75.4	&\bf77.1	&93.0	&\bf75.9	&80.9\\
$P_{y|x}+NP_x$	&94.3	&70.2	&\bf64.6	&\bf96.1	&\bf92.6	&\bf83.8	&94.4	&90.8	&63.5	&60.4	&87.5	&\bf77.0	&\bf77.1	&93.0	&\bf75.9	&81.4\\
\midrule
\textbf{\ourmethod{}}	&\bf95.3	&\bf70.3	&64.4	&\bf96.1	&\bf92.6	&\bf83.8	&\bf98.8	&\bf90.9	&63.5	&\bf64.8	&\bf91.8	&\bf77.0	&\bf77.1	&93.0	&\bf75.9	&\bf82.4\\

\bottomrule  
\end{tabular}
}
\vspace{-.1in}
\end{table*}

\paragraph{Q3. Including low-quality anomalies in training.}
Figure~\ref{fig:q2_aucl} (bottom) shows the LC$_{\textsc{p}}$ obtained by following an inverse ordering of the methods, i.e., lower ranked anomalies are included first. Using this inverse ordering should result in much slower growth of the LC$_{\textsc{p}}$s: in some cases, the test accuracy using \ourmethod{} remains stable (\textsc{Capsule}, \textsc{Metalnut}, \textsc{Screw}), while in others it shows strong fluctuations going up and down quickly (\textsc{Pill}). Interestingly, every baseline's performance goes up for \textsc{Tile}: this is due to their poor ability to assign scores for this dataset, as described in Q1. Surprisingly, \textsc{DataOob} obtains the lowest (best) AULC$_{\textsc{p}}$, while \ourmethod{} has the second best AULC$_{\textsc{p}}$ with just two percentage points as gap. However, when ranking the experiments, \ourmethod{} achieves the best average ranking ($3.66$ of rAULC$_{\textsc{p}}$), thus being the preferred method for most of the experiments. 

On tabular data, the results confirm the previous analysis: \ourmethod{} has the lowest AULC$_{\textsc{p}}$ on $137$ experiments out of $220$, while \textsc{Rarity} and \textsc{Lava} achieve so only on, respectively, $29$ and $31$ experiments. This motivates that \ourmethod{} obtains an average AULC$_{\textsc{p}}$ that is better than all baselines by between $3$ and $10$ percentage points, as shown in Table~\ref{tab:aggregated_results}.

\paragraph{Q4. Prompt tuning for zero-shot anomaly detection.}
CLIP-based anomaly detection methods save the effort of collecting training examples and enable a zero-shot anomaly detection~\cite{jeong2023winclip}. However, their detection performance depends on the choice of prompts, which is usually tuned by using labeled real-world anomalies.
We study the impact of selected auxiliary anomalies on prompt tuning for the \textsc{MvTec} datasets.
Specifically, we search a prompt for each object class achieving the best performance on the selected auxiliary anomalies from a pool of $27$ candidate prompts (see details in \Cref{app:prompts}), and apply the best-performing prompt to CLIP at test time.

\Cref{tab:prompt_tuning} reports the test AUCs of CLIP with the best-performing prompts selected by each data valuation method. We can see that \ourmethod{} performs the best on $12$ of $15$ classes and achieves the highest AUC averaged over all classes. Thanks to accounting for both the class conditional probability and the data density, \ourmethod{} clearly outperforms \textsc{Rarity} and $P_x$, which only consider the data density, $P_{y|x}$, which only considers the class conditional probability, and their naive linear combination $P_{y|x}+NP_x$.
The results confirm that \ourmethod{} selects high-quality auxiliary anomalies for the model selection purpose. We list the prompts selected by \ourmethod{} in \Cref{app:prompts}.

\paragraph{Q5. \ourmethod{}' sensitivity to $\frac{\alpha_0}{\alpha_0+\beta_0}$.} 


\ourmethod{} requires two hyperparameters: $\alpha_0$, $\beta_0$ of the prior Beta distribution. To remove one degree of freedom, we set $\alpha_0+\beta_0 = 1$ such that the dataset size $n$ is much stronger ($n$ times) than our initial belief. Then, we investigate how varying the parameter $\alpha_0 \in [0,0.5)$ impacts \ourmethod{}'s overall performance ($\beta_0 = 1-\alpha_0$). We compare seven versions of our method by setting $\alpha_0 \in \{0.01,0.05,0.1,0.2,0.3,0.4\}$, in addition to the original \ourmethod{} that leverages the contamination level $\alpha_0 = m/n$. We call \ourmethod{}$_{w}$ the variant that uses $\alpha_0 = w$. \Cref{tab:hyperparams} in the Supplement shows the rankings of these $7$ variants for AUC$_{\textsc{qlt}}$ and ACC$_{\textsc{g}}$ and their average. Overall, both parameters have a low impact on our method: while a higher value for $\alpha_0$ improves the AUC$_{\textsc{qlt}}$, in some experiments this improvement does not yield better performance at test time in terms of ACC$_{\textsc{g}}$. Moreover, setting $\alpha_0$ too high or too low has inherent risks: \ourmethod{}$_{0.4}$ and \ourmethod{}$_{0.01}$ are the worst variants by far, with significant drops in performance compared to the other variants.
\section{Conclusion and Limitations}\label{sec:conclusion_and_limitations}
This paper addressed the problem of evaluating the quality of an auxiliary set of synthetic anomalies. With this quality score, one can enrich an anomaly detection dataset to learn a more accurate anomaly detector. 
We proposed the expected anomaly posterior (\ourmethod{}), the first quality score function for auxiliary anomalies derived from an approximation for the posterior over the probability that a given input is an anomaly.
We showed that our approach theoretically assigns higher scores to the realistic anomalies, compared to unrealistic and indistinguishable anomalies. Empirically, we investigated how \ourmethod{} compares to adapted data quality estimators at (1) assigning quality scores, (2) using such scores to enrich the data for training, and (3) model selection. On $40$ datasets, we show that \ourmethod{} outperforms all $12$ baselines in the majority of the cases.

\paragraph{Limitations.} While our theoretical categorization refers to general anomalies, the concept of unrealistic/indistinguishable might be domain-driven and require adjustments in some applications (e.g., scratches on fabrics are unrealistic but would get high quality). Also, uncertainty-based methods require limited training samples to be effective and one may need to reduce the training normals.

\paragraph{Impact Statement.} This paper presents work whose goal is to advance the field of Machine Learning. There are many potential societal consequences of our work, none of which we feel must be specifically highlighted here.

\appendix
\bibliography{bibliography}
\newpage
\section{Supplement}
\subsection{Data quality estimators.}
\label{sec:data_quality_estimators}

Any data quality estimator can be seen as a mapping that assigns a scalar score to any example $(x,y)$. Such a score quantifies the impact of $(x,y)$ on the model's performance when trained including the example in the training set. For this task, they introduce a utility function $U(\bar{D}) \coloneqq \textsc{Perf}(f, \bar{D})$ that takes as input a subset $\bar{D}$ of $D$ and measures the performance of $f$ when trained on it.
Next, we briefly describe the existing data quality estimators employed in the experiments and refer to~\citep{jiang2023opendataval} for additional details.

\begin{itemize}[nolistsep,leftmargin=*]
    \item \textsc{Leave One Out (Loo)} is defined as $\phi_{\textsc{loo}}(x,y) = U(D) - U(D\backslash\{(x,y)\})$, where $U$ is commonly chosen as the accuracy;
    \item \textsc{DataShap} generalizes \textsc{Loo}'s approach to the concept of marginal contributions, which measures the average change in utility when $(x,y)$ is removed from any training set. Given a training set cardinality $j \le N$, the marginal contribution is defined as
    \begin{equation*}
        \mathcal{M}_j(x,y) \coloneqq \binom{N-1}{j-1}^{-1} \sum_{\bar{D}_j \subseteq D, |\bar{D}_j|=j-1} U(\bar{D}_j\cup \{(x,y)\}) - U(\bar{D}_j)
    \end{equation*}
    where $\bar{D}_j$ is a random subset of $D$ of cardinality $j-1$ that does not contain $(x,y)$. Then, \textsc{DataShap}~\cite{ghorbani2019data} computes the score as $\phi_{\textsc{DataShap}}(x,y) = \frac{1}{N} \sum_{j=1}^N \mathcal{M}_j(x,y)$;
    \item \textsc{BetaShap}~\cite{kwon2021beta} generalizes \textsc{DataShap} by considering a weighted average of marginal contributions $\phi_{\textsc{BetaShap}}(x,y) = \frac{1}{N} \sum_{j=1}^N \omega_j \mathcal{M}_j(x,y)$, for some weights $\omega_1,\dots,\omega_N$.
    \item \textsc{DataBanzhaf}~\cite{wang2023data} exploits the same formulation as \textsc{BetaShap} but sets the weights to $\omega_j = 2^{-N} \binom{N-1}{j-1}$.
    \item \textsc{AME}~\cite{lin2022measuring} shows that the average marginal contribution taken over random subsets of $D$ can be efficiently estimated by predicting the model's prediction. They employ a LASSO regression model that minimizes
    \begin{equation*}
        \argmin_{\gamma \in \mathbb{R}^N} \mathbb{E}\left[U(\bar{D}) - g(\mathds{1}(\bar{D}))^T\gamma\right]^2 + \lambda \sum_{i=1}^N |\gamma_i|,
    \end{equation*}
    where $\mathds{1}(\bar{D})$ is the multi-dimensional characteristic function, $\bar{D}$ is a random subset draw the data distribution, $\lambda$ is the regularization parameter, and $g\colon \{0,1\} \to \mathbb{R}^N$ is a predefined transformation. The values $\gamma_i$ represent the quality of $(x_i,y_i)$.
    \item \textsc{kNNShap}~\cite{jia2019efficient} differs from \textsc{DataShap} on the choice of the utility function:
    \begin{equation*}
        U(\bar{D}) = \frac{1}{N_{val} k} \sum_{i=1}^{N_{val}} \sum_{(x_j,y_j) \in \mathcal{N}(x_i, \bar{D})} \mathds{1}(y_i = y_j),
    \end{equation*}
    where $k$ is the number of neighbors, $N_{val}$ is the size of the validation set, and $\mathcal{N}(x_i, \bar{D})$ indicates the set of nearest neighbors for the validation example $x_i$ over the subset $\bar{D}$. Roughly speaking, it measures the proportion of examples in $\bar{D}$ that are neighbors of $x_i$ and share the same label $y_i$.
    \item \textsc{Influence Functions (Inf)}~\cite{feldman2020neural} approximate the difference of utility functions in \textsc{Loo} by splitting $D$ into two subsets of equal cardinalities and randomly drawing subsets from each of them:
    \begin{equation*}
        \phi_{\textsc{Inf}}(x,y) = \mathbb{E}_{\bar{D}_x} [U(\bar{D}_x)] - \mathbb{E}_{\bar{D}_{\not x}} [\bar{D}_{\not x}],
    \end{equation*}
    where all the subsets from $\bar{D}_x$ contain $(x,y)$, while none of the subsets from $\bar{D}_{\not x}$ contain $(x,y)$.
    \item \textsc{Lava}~\cite{just2023lava} measures the quality of $(x,y)$ by quantifying how fast the optimal transport cost between the training and validation sets changes when increasing more weight to $(x,y)$. That is,
    \begin{equation*}
        \phi_{\textsc{Lava}}(x,y) = h^* - \frac{1}{N-1} \sum_j h^*_j,
    \end{equation*}
    where $h^*_i$ is part of the optimal solution of the transport problem.
    \item \textsc{DataOob}~\cite{kwon2023data} relies on the concept of out-of-bag estimate to capture the data quality. Given $B$ weak learners $f_b$, each trained on a bootstrap sample of $D$, the quality score is
    \begin{equation*}
        \phi_{\textsc{DataOob}}(x,y) = \frac{\sum_{b=1}^B \mathds{1}(w_{b}=0) T(y, f_b(x))}{\sum_{b=1}^B \mathds{1}(w_{b}=0)},
    \end{equation*}
    where $w_{b}$ is the number of times $(x,y)$ is selected in the $b-$th bootstrap, and $T$ is an evaluation metric (e.g., correctness $\mathds{1}(y = f_b(x))$).
\end{itemize}

\subsection{Rarity score}
\label{sec:rarity_score}
Formally, given a synthetic image with extracted feature $x$, the rarity score is a function $r_k \colon \R^d \to \R$ such that 
\begin{equation}
r_k(x) = 
\begin{cases}
    0  &\text{if } x \notin \!\!\bigcup\limits_{x_i \in D} \!\! B_k(x_i)\\
    \min\limits_{x_i \in D \colon x \in B_k(x_i)} \!\! NN_k(x_i) &\text{otherwise}
\end{cases}
\end{equation}
where $NN_k(x_i)$ is the distance between $x_i$ and its k-th nearest neighbor in $D$, and $B_k(x_i) = \{x|d(x_i,x) \le NN_k(x_i)\}$ is the k-NN sphere with $x_i$ as center and $NN_k(x_i)$ as radius. The rarity score strongly depends on the choice of the hyperparameter $k$: high values of $k$ could map far unrealistic examples to a positive high score, namely they would be considered authentic, while low values of $k$ could map real examples slightly different than the training data to a null score, namely they would be considered artifacts.

Because the rarity score strictly depends on the hyperparameter $k \in \{1,\dots, n-1\}$, we need to estimate a proper value $\hat{k}$. Let's assume the existence of an optimal $k$, and use the small set of $m$ anomalies to estimate it.
Ideally, $k$ should be: (1) as low as possible to assign null scores to unrealistic anomalies, and (2) high enough to assign positive scores to the real training anomalies. 

Following this insight, we assume a Bayesian perspective and set a normalized variable's $K$ prior to be uniform
\begin{equation*}
    K \coloneqq \frac{k-1}{n-1} \sim \textsc{Beta}(1,1) = \textsc{Unif}(0,1).
\end{equation*}
Roughly speaking, we min-max normalize $K$ to $[0,1]$ to exploit that a Beta prior with a Bernoulli likelihood results in a Beta posterior distribution. Because we want the minimum $k$ that assigns positive scores to the training anomalies $\{x_{\bar{m}}\}_{\bar{m}\le m}$, we compute for each $x_{\bar{m}}$ the minimum $k_{\bar{m}}$ such that $r_{k_{\bar{m}}}(x_{\bar{m}})>0$. The set of normalized $\{\frac{k_{\bar{m}}-1}{n-1}\}_{\bar{m}\le m}$ is the empirical evidence for the Bayesian update, which is
\begin{equation*}
    K\Big|\!\left\{\!\frac{k_{\bar{m}}\!-\!1}{n\!-\!1}\!\right\}\! \sim \!\textsc{Beta}\!\!\left(\!\!1\!+\!\!\!\sum_{\bar{m}\le m}\!\! \frac{k_{\bar{m}}\!-\!1}{n\!-\!1}, 1\! +\! m\! -\!\!\! \sum_{\bar{m}\le m} \!\!\frac{k_{\bar{m}}\!-\!1}{n\!-\!1}\!\right)\!.
\end{equation*}
Finally, we estimate $\hat{k}$ as the $95$th percentile of the posterior distribution of $K$
\begin{equation}
    \hat{k} = \argmin_{t\in[0,1]} \Pp\left(K\Big|\left\{\frac{k_{\bar{m}}-1}{n-1}\right\} \le t\right) \ge 0.95
\end{equation}
which guarantees that at least $95\%$ of real anomalies get a positive rarity score.

\subsection{Prompt tuning for CLIP}\label{app:prompts}
\begin{tcolorbox}
\begin{scriptsize}
\begin{verbatim}
%Candidate prompt templates for MvTec:
    [`{}',`damaged {}'],
    [`flawless {}',`{} with flaw'],
    [`perfect {}',`{} with defect'],
    [`unblemished {}',`{} with damage'],
    [`{} without flaw',`{} with flaw'],
    [`{} without defect',`{} with defect'],
    [`a photo of a normal {}',`a photo of an anomalous {}'],
    [`a cropped photo of a normal {}', `a cropped photo of an anomalous {}'],
    [`a dark photo of a normal {}', `a dark photo of an anomalous {}'],
    [`a photo of a normal {} for inspection', `a photo of an anomalous {} for inspection'],
    [`a photo of a normal {} for viewing', `a photo of an anomalous {} for viewing'],
    [`a bright photo of a normal {}', `a bright photo of an anomalous {}'],
    [`a close-up photo of a normal {}', `a close-up photo of an anomalous {}'],
    [`a blurry photo of a normal {}', `a blurry photo of an anomalous {}'],
    [`a photo of a small normal {}', `a photo of a small anomalous {}'],
    [`a photo of a large normal {}', `a photo of a large anomalous {}'],
    [`a photo of a normal {} for visual inspection', `a photo of an anomalous {} for visual inspection'],
    [`a photo of a normal {} for anomaly detection',`a photo of an anomalous {} for anomaly detection'],
    [`a photo of a {}',`a photo of someting'],
    [`a cropped photo of a {}', `a cropped photo of someting'],
    [`a dark photo of a {}', `a dark photo of someting'],
    [`a photo of a {} for inspection', `a photo of someting for inspection'],
    [`a bright photo of a {}', `a bright photo of someting'],
    [`a close-up photo of a {}', `a close-up photo of someting'],
    [`a blurry photo of a {}', `a blurry photo of someting'],
    [`a photo of a {} for visual inspection', `a photo of someting for visual inspection'],
    [`a photo of a {} for anomaly detection',`a photo of someting for anomaly detection']
\end{verbatim}
\end{scriptsize}
\end{tcolorbox}

\begin{tcolorbox}
\begin{scriptsize}
\begin{verbatim}
%EAP selected prompts for MvTec:
    [`bottle', `damaged bottle'],
    [`cable without defect', `cable with defect'],
    [`unblemished capsule', `capsule with damage'],
    [`carpet', `damaged carpet'],
    [`a bright photo of a normal grid', `a bright photo of an anomalous grid'],
    [`hazelnut without defect', `hazelnut with defect'],
    [`a photo of a normal leather for inspection', `a photo of an anomalous leather for inspection'],
    [`metalnut', `damaged metalnut'],
    [`pill', `damaged pill'],
    [`a close-up photo of a screw', `a close-up photo of someting'],
    [`tile', `damaged tile'],
    [`toothbrush without flaw', `toothbrush with flaw'],
    [`a blurry photo of a normal transistor', `a blurry photo of an anomalous transistor'],
    [`wood', `damaged wood'],
    [`zipper without defect', `zipper with defect']
\end{verbatim}
\end{scriptsize}
\end{tcolorbox}

\begin{table}[htpb]
\centering
\small
\caption{Comparison between \ourmethod{} with default $\alpha_0$ and its six variants \ourmethod{}$_w$ that set $\alpha_0=w, \beta_0=1-w$. Rankings show low sensitivity to such a choice, as long as $\alpha_0<0.4$.}\label{tab:hyperparams}
\begin{tabular}{lcccc}
\toprule
\textsc{Evaluator} & \textbf{rAUC$_{\textsc{qlt}}$} & \textbf{rACC$_{\textsc{g}}$}  & \textbf{\textsc{Avg. Rank}} \\
\midrule
\ourmethod{} & 2.92 & \bf{2.73} & \bf{2.83} \\
\ourmethod{}$_{0.2}$ & \bf{2.78} & 3.25 & 3.02 \\
\ourmethod{}$_{0.3}$ & 3.39 & 3.40  & 3.40 \\
\ourmethod{}$_{0.1}$ & 3.01 & 3.46  & 3.24 \\
\ourmethod{}$_{0.05}$ & 3.63 & 3.78  & 3.71 \\
\ourmethod{}$_{0.01}$ & 4.10 & 3.54 & 3.82 \\
\ourmethod{}$_{0.4}$ & 4.71 & 4.79 & 4.75 \\
\bottomrule
\end{tabular}
\end{table}

\begin{table*}[htpb]
\setlength{\tabcolsep}{2.5pt}
\centering
\small
\caption{Summary of the results obtained by the $13$ methods over all $40$ datasets. We report the mean $\pm$ std, computed over all experiments. Overall, \ourmethod{} achieves the best performance and ranking position for all evaluation metrics as well as the best average ranking (last column).}\label{tab:mean_pm_std}
\resizebox{\linewidth}{!}{ 
\begin{tabular}{l|c|c|c|c|c|c|c|c|c}
\toprule
 \textsc{Evaluator} & \textbf{AUC$_{\textsc{qlt}}$} & \textbf{AULC$_{\textsc{g}}$} & \textbf{ACC$_{\textsc{g}}$} & \textbf{AULC$_{\textsc{p}}$}($\downarrow$) 
& \textbf{rAUC$_{\textsc{qlt}}$} & \textbf{rAULC$_{\textsc{g}}$} & \textbf{rACC$_{\textsc{g}}$} & \textbf{rAULC$_{\textsc{p}}$}  & \textbf{\textsc{Avg. Rank}} \\
\midrule   
\bf{\ourmethod{}} & \bf{0.81 $\pm$ 0.13} & \bf{0.76 $\pm$ 0.15} & \bf{0.84 $\pm$ 0.14} & \bf{0.70 $\pm$ 0.14} & \bf{1.95 $\pm$ 1.65} & \bf{4.12 $\pm$ 2.88} & \bf{3.76 $\pm$ 2.75} & \bf{2.82 $\pm$ 2.17} & \bf{3.16 $\pm$ 1.67} \\
\textsc{Rarity} & 0.70 $\pm$ 0.14 & 0.74 $\pm$ 0.16 & 0.82 $\pm$ 0.16 & 0.75 $\pm$ 0.15 & 3.87 $\pm$ 2.73 & 4.71 $\pm$ 3.36 & 5.11 $\pm$ 3.51 & 6.29 $\pm$ 3.49 & 4.99 $\pm$ 2.68 \\
\textsc{Lava} & 0.73 $\pm$ 0.13 & 0.71 $\pm$ 0.16 & 0.78 $\pm$ 0.17 & 0.73 $\pm$ 0.14 & 3.21 $\pm$ 2.03 & 7.79 $\pm$ 3.63 & 7.29 $\pm$ 3.62 & 4.36 $\pm$ 2.54 & 5.66 $\pm$ 2.19 \\
$P_{y|x}+NP_x$ & 0.67 $\pm$ 0.18 & 0.73 $\pm$ 0.16 & 0.81 $\pm$ 0.16 & 0.75 $\pm$ 0.14 & 4.82 $\pm$ 3.22 & 5.87 $\pm$ 3.40 & 5.90 $\pm$ 3.37 & 6.55 $\pm$ 3.22 & 5.79 $\pm$ 2.53 \\
\textsc{Loo} & 0.52 $\pm$ 0.17 & 0.72 $\pm$ 0.16 & 0.79 $\pm$ 0.16 & 0.75 $\pm$ 0.14 & 7.59 $\pm$ 3.48 & 6.42 $\pm$ 3.57 & 6.86 $\pm$ 3.31 & 7.04 $\pm$ 3.42 & 6.98 $\pm$ 2.90 \\
\textsc{kNNShap} & 0.53 $\pm$ 0.08 & 0.74 $\pm$ 0.16 & 0.81 $\pm$ 0.15 & 0.78 $\pm$ 0.15 & 8.05 $\pm$ 2.02 & 5.74 $\pm$ 2.84 & 5.59 $\pm$ 2.60 & 9.28 $\pm$ 2.36 & 7.16 $\pm$ 1.40 \\
\textsc{RandomEv} & 0.50 $\pm$ 0.06 & 0.74 $\pm$ 0.16 & 0.81 $\pm$ 0.15 & 0.79 $\pm$ 0.15 & 8.55 $\pm$ 2.43 & 5.50 $\pm$ 2.63 & 5.60 $\pm$ 2.45 & 9.80 $\pm$ 2.21 & 7.36 $\pm$ 1.66 \\
\textsc{AME} & 0.50 $\pm$ 0.05 & 0.74 $\pm$ 0.16 & 0.81 $\pm$ 0.15 & 0.79 $\pm$ 0.15 & 8.54 $\pm$ 2.06 & 5.66 $\pm$ 2.59 & 5.58 $\pm$ 2.50 & 9.84 $\pm$ 2.17 & 7.41 $\pm$ 1.47 \\
$P_{y|x}$ & 0.52 $\pm$ 0.14 & 0.69 $\pm$ 0.16 & 0.73 $\pm$ 0.16 & \bf{0.70 $\pm$ 0.13} & 8.69 $\pm$ 3.10 & 9.66 $\pm$ 3.07 & 10.64 $\pm$ 2.46 & 3.73 $\pm$ 2.67 & 8.18 $\pm$ 2.13 \\
\textsc{DataOob} & 0.51 $\pm$ 0.15 & 0.71 $\pm$ 0.15 & 0.75 $\pm$ 0.16 & 0.73 $\pm$ 0.14 & 8.56 $\pm$ 3.38 & 8.55 $\pm$ 3.15 & 9.71 $\pm$ 2.56 & 6.04 $\pm$ 3.71 & 8.22 $\pm$ 2.44 \\
\textsc{Inf} & 0.46 $\pm$ 0.14 & 0.71 $\pm$ 0.16 & 0.79 $\pm$ 0.17 & 0.78 $\pm$ 0.14 & 9.65 $\pm$ 2.61 & 7.43 $\pm$ 3.17 & 7.08 $\pm$ 2.95 & 9.56 $\pm$ 2.79 & 8.43 $\pm$ 2.26 \\
$P_x$ & 0.53 $\pm$ 0.11 & 0.63 $\pm$ 0.15 & 0.69 $\pm$ 0.16 & 0.74 $\pm$ 0.14 & 7.59 $\pm$ 3.59 & 11.28 $\pm$ 2.68 & 10.38 $\pm$ 3.32 & 6.06 $\pm$ 3.27 & 8.83 $\pm$ 2.51 \\
\textsc{DataBanzhaf} & 0.45 $\pm$ 0.14 & 0.71 $\pm$ 0.16 & 0.78 $\pm$ 0.17 & 0.79 $\pm$ 0.14 & 9.92 $\pm$ 2.89 & 8.26 $\pm$ 3.24 & 7.49 $\pm$ 3.24 & 9.63 $\pm$ 2.72 & 8.83 $\pm$ 2.39 \\
\bottomrule      
\end{tabular}
}
\end{table*}

\end{document}